\theoremstyle{plain}
\newtheorem{proposition}{Proposition}
\DeclareRobustCommand{\parhead}[1]{\textbf{#1}~}
\newcommand{\eps}{\ensuremath{\varepsilon}}
\newcommand{\Lo}{\ensuremath{\mathcal{L}}}    
\newcommand{\g}{\, ;}
\newacronym{ADVI}{advi}{automatic differentiation variational inference}
\newacronym{BBVI}{bbvi}{black-box variational inference}
\newacronym{CTM}{ctm}{correlated topic model}
\newacronym[\glslongpluralkey={deep exponential families}]{DEF}{def}{deep exponential family}
\newacronym{DMIS}{dmis}{deterministic multiple importance sampling}
\newacronym{ELBO}{elbo}{evidence lower bound}
\newacronym{GNTS}{gn-ts}{gamma-normal time series model}
\newacronym{G-REP}{g-rep}{generalized reparameterization}
\newacronym{KL}{kl}{{K}ullback-{L}eibler}
\newacronym{LDA}{lda}{latent {D}irichlet allocation}
\newacronym{MF}{mf}{matrix factorization}
\newacronym{MIS}{mis}{multiple importance sampling}
\newacronym{MATLAB}{matlab}{MATLAB}
\newacronym{NIPS}{nips}{Neural Information Processing Systems}
\newacronym{OBBVI}{o-bbvi}{overdispersed black-box variational inference}
\newacronym{RS-VI}{rsvi}{rejection sampling variational inference}
\newacronym{SVI}{svi}{stochastic variational inference}
\newacronym{VI}{vi}{variational inference}
\begin{document}

\twocolumn[

\aistatstitle{Reparameterization Gradients through Acceptance-Rejection Sampling Algorithms}

\aistatsauthor{Christian A. Naesseth$^{\dagger\ddagger}$ \And Francisco J. R. Ruiz$^{\ddagger\S}$ \And Scott W. Linderman$^{\ddagger}$  \And David M. Blei$^{\ddagger}$}

\aistatsaddress{$^\dagger$Link\"oping University~~$^\ddagger$Columbia University~~$^\S$University of Cambridge} ]

\begin{abstract}
Variational inference using the reparameterization trick has enabled large-scale approximate Bayesian inference in complex probabilistic models, leveraging stochastic optimization to sidestep intractable expectations. 
The reparameterization trick is applicable when we can simulate a random variable by applying a differentiable deterministic function on an auxiliary random variable whose distribution is fixed. For many distributions of interest (such as the gamma or Dirichlet), simulation of random variables relies on acceptance-rejection sampling. The discontinuity introduced by the accept--reject step means that standard reparameterization tricks are not applicable. We propose a new method that lets us leverage reparameterization gradients even when variables are outputs of a acceptance-rejection sampling algorithm. Our approach enables reparameterization on a larger class of variational distributions. In several studies of real and synthetic data, we show that the variance of the estimator of the gradient is significantly lower than other state-of-the-art methods. This leads to faster convergence of stochastic gradient variational inference.
\end{abstract}


\section{Introduction}\label{sec:introduction}

Variational inference~\citep{hinton1993,Waterhouse96,Jordan1999}
underlies many recent advances in large scale probabilistic modeling.
It has enabled sophisticated modeling of complex domains such as
images~\citep{Kingma2014} and text~\citep{Hoffman2013}. By definition,
the success of variational approaches depends on our ability to
\begin{enumerate*}[label=(\roman*)]
\item formulate a flexible parametric family of distributions; and
\item optimize the parameters to find the member of this family that
  most closely approximates the true posterior.
\end{enumerate*}
These two criteria are at odds---the more flexible the family, the
more challenging the optimization problem.  In this paper, we present
a novel method that enables more efficient optimization for a large
class of variational distributions, namely, for distributions that we
can efficiently simulate by acceptance-rejection sampling, or rejection sampling for short.

For complex models, the variational parameters can be optimized
by stochastic gradient ascent on the \gls{ELBO}, a lower bound on the 
marginal likelihood of the data.
There are two primary means of estimating the gradient of the \gls{ELBO}:
the score function estimator \citep{Paisley2012,Ranganath2014,Mnih2014} and the
reparameterization trick \citep{Kingma2014,Rezende2014,Price1958, Bonnet1964}, both of which rely on Monte Carlo sampling.  While the
reparameterization trick often yields lower variance estimates and
therefore leads to more efficient optimization, this approach has been
limited in scope to a few variational families (typically Gaussians).
Indeed, some lines of research have already tried to address this 
limitation \citep{Knowles2015,RuizTB2016}.

There are two requirements to apply the reparameterization trick.  The
first is that the random variable can be obtained through a
transformation of a simple random variable, such as a uniform or
standard normal; the second is that the transformation be
differentiable.  In this paper, we observe that all random variables
we simulate on our computers are ultimately transformations of
uniforms, often followed by accept-reject steps.  So if the
transformations are differentiable then we can use these existing simulation
algorithms to expand the scope of the reparameterization trick.

Thus, we show how to use existing rejection samplers to develop
stochastic gradients of variational parameters.  In short, each
rejection sampler uses a highly-tuned transformation that is
well-suited for its distribution.  We can construct new
reparameterization gradients by ``removing the lid'' from these black
boxes, applying $65+$ years of research on
transformations~\citep{vonneumann:51, devroye1986} to variational
inference. We demonstrate that this broadens the scope of variational
models amenable to efficient inference and provides lower-variance
estimates of the gradient compared to state-of-the-art approaches.

We first review variational inference, with a focus on stochastic
gradient methods. We then present our key contribution, \gls{RS-VI},
showing how to use efficient rejection samplers to produce
low-variance stochastic gradients of the variational objective.  We
study two concrete examples, analyzing rejection samplers for the
gamma and Dirichlet to produce new reparameterization gradients for
their corresponding variational factors. Finally, we analyze two
datasets with a \gls{DEF}~\citep{Ranganath2015}, comparing \gls{RS-VI}
to the state of the art.  We found that \gls{RS-VI} achieves a
significant reduction in variance and faster convergence of the
\gls{ELBO}. 
Code for all experiments is provided at
\url{github.com/blei-lab/ars-reparameterization}.

\section{Variational Inference}\label{sec:background}
Let $p(x,z)$ be a probabilistic model, \ie, a joint probability distribution of \emph{data} $x$ and \emph{latent} (unobserved) variables $z$. In Bayesian inference, we are interested in the posterior distribution $p(z|x) = \frac{p(x,z)}{p(x)}$. For most models, the posterior distribution is analytically intractable and we have to use an approximation, such as Monte Carlo methods or variational inference. In this paper, we focus on variational inference.

In variational inference, we approximate the posterior with a \emph{variational family} of distributions $q(z\g\theta)$, parameterized by $\theta$. Typically, we choose the \emph{variational parameters} $\theta$ that minimize the \gls{KL} divergence between $q(z\g\theta)$ and $p(z | x)$. This minimization is equivalent to maximizing the \gls{ELBO} \citep{Jordan1999}, defined as
\begin{align}
\begin{split}
\Lo(\theta) &= \E_{q(z\g \theta)}\left[f(z)\right] + \Ent[q(z\g \theta)],\\
f(z) & \eqdef \log p(x,z), \\
\quad\Ent[q(z\g \theta)] & \eqdef \E_{q(z\g \theta)}[- \log q(z\g \theta)].
\end{split}\label{eq:elbo}
\end{align}
When the model and variational family satisfy conjugacy requirements, we can use coordinate ascent to find a local optimum of the \gls{ELBO} \citep{BleiAlp2016}. If the conjugacy requirements are not satisfied, a common approach is to build a Monte Carlo estimator of the gradient of the \gls{ELBO} \citep{Paisley2012,Ranganath2014,Mnih2014,Salimans2013,Kingma2014}. This results in a stochastic optimization procedure, where different Monte Carlo estimators of the gradient amount to different algorithms. We review below two common estimators: the score function estimator and the reparameterization trick.%
\footnote{In this paper, we assume for simplicity that the gradient of the entropy $\grad_\theta \Ent[q(z\g \theta)]$ is available analytically. The method that we propose in Section~\ref{sec:method} can be easily extended to handle non-analytical entropy terms. Indeed, the resulting estimator of the gradient may have lower variance when the analytic gradient of the entropy is replaced by its Monte Carlo estimate. Here we do not explore that.}

\parhead{Score function estimator.}
The score function estimator, also known as the log-derivative trick or \textsc{reinforce} \citep{Williams1992,Glynn1990}, is a general way to estimate the gradient of the \gls{ELBO} \citep{Paisley2012,Ranganath2014,Mnih2014}. The score function estimator expresses the gradient as an expectation with respect to $q(z\g\theta)$:
\begin{align*}
\grad_\theta \Lo(\theta) = \E_{q(z\g\theta)}[ f(z) \grad_\theta \log q(z\g\theta)] + \grad_\theta \Ent[q(z\g \theta)].
\end{align*}
We then form Monte Carlo estimates by approximating the expectation with independent samples from the variational distribution. Though it is very general, the score function estimator typically suffers from high variance. In practice we also need to apply variance reduction techniques such as Rao-Blackwellization \citep{Casella1996} and control variates \citep{robert2004monte}.

\parhead{Reparameterization trick.}
The reparameterization trick \citep{Salimans2013,Kingma2014,Price1958,Bonnet1964} results in a lower variance estimator compared to the score function, but it is not as generally applicable. It requires that: (i) the latent variables $z$ are continuous; and (ii) we can simulate from $q(z\g\theta)$ as follows, 
\begin{align}
z = h(\eps,\theta),\qquad \textrm{with } \eps \sim s(\eps).
\label{eq:mapping}
\end{align}
Here, $s(\eps)$ is a distribution that does not depend on the variational parameters; it is typically a standard normal or a standard uniform. Further, $h(\eps,\theta)$ must be differentiable with respect to $\theta$. In statistics, this is known as a non-central parameterization and has been shown to be helpful in, \eg, Markov chain Monte Carlo methods \citep{bernardo2003non}.

Using \eqref{eq:mapping}, we can move the derivative inside the expectation and rewrite the gradient of the \gls{ELBO} as
\begin{align*}
&\grad_\theta \Lo(\theta) = \E_{ s(\eps) }\left[\grad_z f(h(\eps,\theta)) \grad_\theta h(\eps,\theta)\right] + \grad_\theta \Ent[q(z\g \theta)].
\end{align*}
Empirically, the reparameterization trick has been shown to be beneficial over direct Monte Carlo estimation of the gradient using the score fuction estimator \citep{Salimans2013,Kingma2014,Titsias2014_doubly,Fan2015}. Unfortunately, many distributions commonly used in variational inference, such as gamma or Dirichlet, are not amenable to standard reparameterization because samples are generated using a rejection sampler \citep{vonneumann:51,robert2004monte}, introducing discontinuities to the mapping. We next show that taking a novel view of the acceptance-rejection sampler lets us perform exact reparameterization.


\section{Reparameterizing the Acceptance-Rejection Sampler}\label{sec:method}


\begin{figure*}[t]
  \centering
  \includegraphics[width=6.5in]{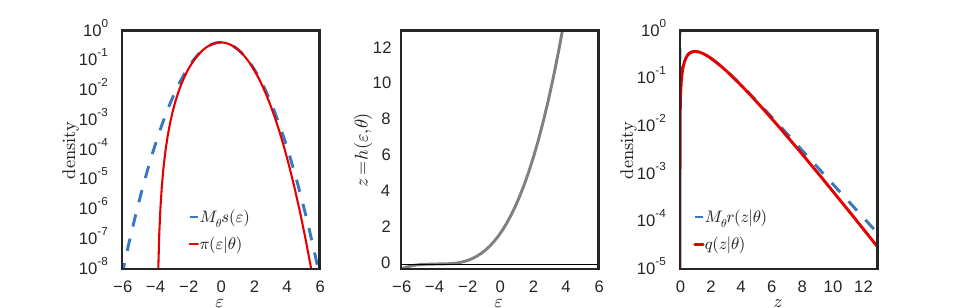} 
  \caption{Example of a reparameterized rejection sampler for~${q(z \g
      \theta)= \gam(\theta,1)}$, shown here with~${\theta = 2}$.  We
    use the rejection sampling algorithm of \citet{Marsaglia:2000},
    which is based on a nonlinear transformation ${h(\varepsilon,
      \theta)}$ of a standard normal~${\varepsilon \sim \N(0,1)}$
    (c.f. Eq.~\ref{eq:marsaglia}), and has acceptance probability
    of~$0.98$ for~${\theta=2}$. The marginal density of the accepted
    value of~$\varepsilon$ (integrating out the acceptance
    variables,~$u_{1:i}$) is given by~${\pi(\varepsilon \g
      \theta)}$. We compute unbiased estimates of the gradient of the
    \gls{ELBO}~\eqref{eq:reparamgrad} via Monte Carlo, using
    Algorithm~\ref{alg:rejectionsampling} to rejection
    sample~${\varepsilon \sim \pi(\varepsilon \g \theta)}$. By
    reparameterizing in terms of~$\varepsilon$, we obtain a
    low-variance estimator of the gradient for challenging
    variational distributions.}
  \label{fig:real_gamma}
\end{figure*}

The basic idea behind reparameterization is to rewrite simulation from a complex distribution as a deterministic mapping of its parameters and a set of simpler random variables. We can view the rejection sampler as a complicated deterministic mapping of a (random) number of simple random variables such as uniforms and normals. This makes it tempting to take the standard reparameterization approach when we consider random variables generated by rejection samplers. However, this mapping is in general not continuous, and thus moving the derivative inside the expectation and using direct automatic differentiation would not necessarily give the correct answer.

Our insight is that we can overcome this problem by instead considering only the marginal over the accepted sample, analytically integrating out the accept-reject variable. Thus, the mapping comes from the proposal step. This is continuous under mild assumptions, enabling us to greatly extend the class of variational families amenable to reparameterization.

We first review rejection sampling and present the reparameterized rejection sampler. Next we show how to use it to calculate low-variance gradients of the \gls{ELBO}. Finally, we present the complete stochastic optimization for variational inference, \gls{RS-VI}.

\subsection{Reparameterized Rejection Sampling}
Acceptance-Rejection sampling is a powerful way of simulating random variables from complex distributions whose inverse cumulative distribution functions are not available or are too expensive to evaluate \citep{devroye1986,robert2004monte}. We consider an alternative view of rejection sampling in which we explicitly make use of the reparameterization trick. This view of the rejection sampler enables our variational inference algorithm in Section~\ref{subsec:method}. 

To generate samples from a distribution $q(z\g\theta)$ using rejection sampling, we first sample from a \emph{proposal distribution} $r(z\g \theta)$ such that $q(z\g \theta) \leq M_\theta r(z\g \theta)$ for some $M_\theta <\infty$. In our version of the rejection sampler, we assume that the proposal distribution is reparameterizable, \ie, that generating $z\sim r(z\g \theta)$ is equivalent to generating $\eps \sim s(\eps)$ (where $s(\eps)$ does not depend on $\theta$) and then setting $z=h(\eps,\theta)$ for a differentiable function $h(\eps,\theta)$. We then accept the sample with probability $\min\left\{1, \frac{q\left(h(\eps,\theta)\g \theta\right)}{M_\theta r\left(h(\eps,\theta) \g \theta\right)}\right\}$; otherwise, we reject the sample and repeat the process. We illustrate this in Figure~\ref{fig:real_gamma} and provide a summary of the method in Algorithm~\ref{alg:rejectionsampling}, where we consider the output to be the (accepted) variable $\eps$, instead of $z$.


The ability to simulate from $r(z\g \theta)$ by a reparameterization through a differentiable $h(\eps,\theta)$ is not needed for the rejection sampler to be valid. However, this is indeed the case for the rejection sampler of many common distributions.

\begin{algorithm}[t]
\caption{Reparameterized Rejection Sampling}\label{alg:rejectionsampling}
\begin{algorithmic}[1]
\REQUIRE target $q(z\g \theta)$, proposal $r(z\g \theta)$, and constant $M_\theta$, with $q(z\g \theta) \leq M_\theta r(z\g \theta)$ 
\ENSURE $\eps$ such that $h(\eps,\theta) \sim q(z\g \theta)$
\STATE $i \gets 0$
\REPEAT 
\STATE $i \gets i +1 $
\STATE Propose $\eps_i \sim s(\eps)$
\STATE Simulate $u_i \sim \uni[0,1]$
\UNTIL $u_i < \frac{q\left(h(\eps_i,\theta)\g \theta\right)}{M_\theta r\left(h(\eps_i,\theta) \g \theta\right)}$
\RETURN $\eps_i$
\end{algorithmic}
\end{algorithm}

%
\subsection{The Reparameterized Rejection Sampler in Variational Inference}
\label{subsec:method}

We now use reparameterized rejection sampling to develop a novel Monte Carlo estimator of the gradient of the \gls{ELBO}. We first rewrite the \gls{ELBO} in \eqref{eq:elbo} as an expectation in terms of the transformed variable $\eps$,
\begin{align}
\begin{split}
&\Lo(\theta) = \E_{q(z\g \theta)}\left[f(z)\right] + \Ent[q(z\g \theta)] \\
&\quad\quad=\E_{\pi(\eps\g\theta)}\left[f\left(h(\eps,\theta)\right)\right] + \Ent[q(z\g \theta)].
\end{split}
\label{eq:ReparamExp}
\end{align}
In this expectation, $\pi(\eps\g\theta)$ is the distribution of the \emph{accepted sample} $\eps$ in Algorithm~\ref{alg:rejectionsampling}. We construct it by marginalizing over the auxiliary uniform variable $u$,
\begin{align}
\pi(\eps\g\theta) &= \int \pi(\eps,u\g\theta) \myd u \nonumber\\
&= \int M_\theta s(\eps) 
\mathds{1}
\left[0 < u < \frac{q\left(h(\eps,\theta)\g \theta\right)}{M_\theta r\left(h(\eps,\theta)\g \theta\right)}\right] \myd u \nonumber\\
&= s(\eps) \frac{q\left(h(\eps,\theta)\g \theta\right)}{r\left(h(\eps,\theta)\g \theta\right)},
\label{eq:distrejection}
\end{align}
where $\mathds{1}[x \in A]$ is the indicator function, and recall that $M_\theta$ is a constant used in the rejection sampler. 
This can be seen by the algorithmic definition of the rejection sampler, where we propose values $\eps \sim s(\eps)$ and $u \sim \uni[0,1]$ until acceptance, \ie, until $u < \frac{q\left(h(\eps,\theta)\g \theta\right)}{M_\theta r\left(h(\eps,\theta) \g \theta\right)}$. Eq.~\ref{eq:ReparamExp} follows intuitively, but we formalize it in Proposition~\ref{thm:reparam}.



\begin{proposition}
\label{thm:reparam}
Let $f$ be any measurable function, and $\eps \sim \pi(\eps\g\theta)$, defined by \eqref{eq:distrejection} (and implicitly by Algorithm~\ref{alg:rejectionsampling}). Then
\begin{align*}
&\E_{\pi(\eps\g\theta)}\left[f\left(h(\eps,\theta)\right)\right] = \int f(z) q(z\g \theta) \myd z.
\end{align*}
\end{proposition}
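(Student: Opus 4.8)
The plan is to evaluate the left-hand side directly as an integral against the accepted-sample density $\pi(\eps\g\theta)$ and then transport it back to $z$-space using the fact that $h(\cdot,\theta)$ reparameterizes the proposal. First I would write
\[
\E_{\pi(\eps\g\theta)}\left[f\left(h(\eps,\theta)\right)\right] = \int f\left(h(\eps,\theta)\right)\,\pi(\eps\g\theta)\,\myd\eps,
\]
and substitute the closed form of $\pi(\eps\g\theta)$ from \eqref{eq:distrejection}, giving the integrand $f(h(\eps,\theta))\, s(\eps)\, \frac{q(h(\eps,\theta)\g\theta)}{r(h(\eps,\theta)\g\theta)}$. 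Here I would note that the bound $q(z\g\theta)\le M_\theta r(z\g\theta)$ guarantees $r(z\g\theta)>0$ wherever $q(z\g\theta)>0$, so the ratio is well defined on the relevant support.

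The key step is to recognize the resulting expression as an expectation under the proposal. Define $g(z) \eqdef f(z)\,\frac{q(z\g\theta)}{r(z\g\theta)}$; then the integral above is exactly $\E_{s(\eps)}[g(h(\eps,\theta))]$. Because $h(\cdot,\theta)$ reparameterizes $r(z\g\theta)$ --- that is, $z=h(\eps,\theta)$ with $\eps\sim s(\eps)$ has law $r(z\g\theta)$ by the defining assumption of the reparameterized proposal --- the pushforward identity $\E_{s(\eps)}[g(h(\eps,\theta))] = \int g(z)\, r(z\g\theta)\,\myd z$ holds for any measurable $g$. Substituting back and cancelling $r(z\g\theta)$ yields $\int f(z)\,\frac{q(z\g\theta)}{r(z\g\theta)}\,r(z\g\theta)\,\myd z = \int f(z)\,q(z\g\theta)\,\myd z$, which is the claim.

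The main obstacle --- and really the only nontrivial point --- is justifying the change of measure cleanly. Rather than performing an explicit change of variables through $h$ (which would require its Jacobian and, if $h(\cdot,\theta)$ fails to be a bijection, a more delicate argument), I would lean on the reparameterization assumption itself: by hypothesis $h(\eps,\theta)$ with $\eps\sim s(\eps)$ is distributed as $r(z\g\theta)$, so the pushforward of $s$ under $h(\cdot,\theta)$ is precisely $r$, and the identity $\E_{s}[g\circ h]=\E_{r}[g]$ is just the definition of the pushforward measure applied to the measurable test function $g$. This sidesteps any regularity concerns about $h$ beyond what is already assumed. As a sanity check one may take $f\equiv 1$, which recovers $\int q(z\g\theta)\,\myd z = 1$ and confirms that $\pi(\eps\g\theta)$ from \eqref{eq:distrejection} is correctly normalized.
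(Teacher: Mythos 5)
Your proof is correct and follows essentially the same route as the paper's: substitute the closed form of $\pi(\eps\g\theta)$, then use the fact that $h(\cdot,\theta)$ pushes $s(\eps)$ forward to $r(z\g\theta)$ to rewrite the integral in $z$-space, where the ratio $q/r$ cancels against $r$. Your explicit treatment of the pushforward identity (and the remark that $q \leq M_\theta r$ keeps the ratio well defined) is simply a more careful spelling-out of the step the paper justifies with the phrase ``because $h(\eps,\theta),\ \eps \sim s(\eps)$ is a reparameterization of $r(z\g\theta)$.''
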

\begin{proof}
Using the definition of $\pi(\eps\g\theta)$,
\begin{align*}
&\E_{\pi(\eps\g\theta)}\left[f\left(h(\eps,\theta)\right)\right] 
=\int f\left(h(\eps,\theta)\right) s(\eps) \frac{q\left(h(\eps,\theta)\g \theta\right)}{ r\left(h(\eps,\theta)\g \theta\right)}  \myd \eps \\
&= \int  f(z) r(z\g \theta) \frac{q(z\g \theta)}{r(z\g \theta)} \myd z = \int f(z) q(z\g \theta) \myd z,
\end{align*}
where the second to last equality follows because ${h(\eps,\theta), \eps \sim s(\eps)}$ is a reparameterization of $r(z\g \theta)$.
\end{proof}
We can now compute the gradient of $\E_{q(z\g \theta)}[f(z)]$ based on Eq.~\ref{eq:ReparamExp},
\begin{align}
\begin{split}
&\grad_\theta \E_{q(z\g \theta)}[f(z)] = \grad_\theta \E_{\pi(\eps\g\theta)}[f(h(\eps,\theta))] \\
&= \underbrace{\E_{\pi(\eps\g\theta)}[\grad_\theta f(h(\eps,\theta))]}_{\defeq g_{\text{rep}}}+ \\
&\quad + \underbrace{\E_{\pi(\eps\g\theta)}\left[ f(h(\eps,\theta)) \grad_\theta  \log \frac{q(h(\eps,\theta)\g \theta)}{r(h(\eps,\theta)\g \theta)} \right]}_{\defeq g_{\text{cor}}},
\end{split}\label{eq:graddef}
\end{align}
where we have used the log-derivative trick and rewritten the integrals as expectations with respect to $\pi(\eps\g\theta)$ (see the supplement for all details.) We define $g_{\text{rep}}$ as the reparameterization term, which takes advantage of gradients with respect to the model and its latent variables; we define $g_{\text{cor}}$ as a correction term that accounts for \emph{not} using $r(z\g \theta) \equiv q(z\g \theta)$.



Using \eqref{eq:graddef}, the gradient of the \gls{ELBO} in \eqref{eq:elbo} can be written as
\begin{align}
\begin{split}
&\grad_\theta \Lo(\theta)  = g_{\text{rep}}+ g_{\text{cor}}+\grad_\theta \Ent[q(z\g\theta)],
\end{split}
\label{eq:reparamgrad}
\end{align}
and thus we can build an unbiased one-sample Monte Carlo estimator $\hat g \approx \grad_\theta \Lo(\theta)$ as
\begin{align}
\begin{split}
&\hat g  \eqdef \hat g_{\text{rep}} + \hat g_{\text{cor}}+\grad_\theta \Ent[q(z\g\theta)], \\
& \hat g_{\text{rep}} = \grad_z f(z)\big|_{z=h(\eps,\theta)} \grad_\theta h(\eps,\theta) \\
& \hat g_{\text{cor}} = f(h(\eps,\theta)) \grad_\theta  \log \frac{q(h(\eps,\theta)\g \theta)}{r(h(\eps,\theta)\g \theta)},
\end{split}
\label{eq:reparamgrad2}
\end{align}
where $\eps$ is a sample generated using Algorithm~\ref{alg:rejectionsampling}. Of course, one could generate more samples of $\eps$ and average, but we have found a single sample to suffice in practice.



Note if $h(\eps,\theta)$ is invertible in $\eps$ then we can simplify the evaluation of the gradient of the log-ratio in $g_{\text{cor}}$,
\begin{align}
&\grad_\theta  \log \frac{q(h(\eps,\theta)\g \theta)}{r(h(\eps,\theta)\g \theta)}  = \nonumber\\
&\quad\quad\quad\grad_\theta  \log q(h(\eps,\theta)\g \theta) + \grad_\theta \log \left| \frac{d h}{d\eps}(\eps,\theta) \right| .\label{eq:gradQRcorr}
\end{align}
See the supplementary material for details.

Alternatively, we could rewrite the gradient as an expectation with respect to $s(\eps)$ (this is an intermediate step in the derivation shown in the supplement),
\begin{align}
&\grad_\theta \E_{q(z\g \theta)}[f(z)] = \E_{s(\eps)}\left[ \frac{q\left(h(\eps,\theta)\g \theta\right)}{r\left(h(\eps,\theta)\g \theta\right)}  \grad_\theta f\left(h(\eps,\theta)\right)  \right]+ \nonumber\\
&+\E_{s(\eps)}\left[ \frac{q\left(h(\eps,\theta)\g \theta\right)}{r\left(h(\eps,\theta)\g \theta\right)}   f\left(h(\eps,\theta)\right) \grad_\theta \log  \frac{q\left(h(\eps,\theta)\g \theta\right)}{r\left(h(\eps,\theta)\g \theta\right)}  \right],\nonumber
\end{align}
and build an importance sampling-based Monte Carlo estimator, in which the importance weights would be $q\left(h(\eps,\theta)\g \theta\right) / r\left(h(\eps,\theta)\g \theta\right)$.
However, we would expect this approach to be beneficial for low-dimensional problems only, since for high-dimensional $z$ the variance of the importance weights would be too high.

\begin{algorithm}
\caption{Rejection Sampling Variational Inference}\label{alg:svi}
\begin{algorithmic}[1]
\REQUIRE Data $x$, model $p(x,z)$, variational family $q(z\g \theta)$
\ENSURE Variational parameters $\theta^*$
\REPEAT
\STATE Run Algorithm~\ref{alg:rejectionsampling} for $\theta^n$ to obtain a sample $\eps$
\STATE Estimate the gradient $\hat g^n$ at $\theta = \theta^n$ (Eq.~\ref{eq:reparamgrad2})
\STATE Calculate the stepsize $\rho^n$ (Eq.~\ref{eq:stepsize})
\STATE Update $\theta^{n+1} = \theta^n + \rho^n \hat{g}^n$
\UNTIL \textbf{convergence}
\end{algorithmic}
\end{algorithm}

\subsection{Full Algorithm}
We now describe the full variational algorithm based on reparameterizing the rejection sampler. In Section~\ref{sec:examples} we give concrete examples of how to reparameterize common variational families.

We make use of Eq.~\ref{eq:reparamgrad} to obtain a Monte Carlo estimator of the gradient of the \gls{ELBO}. We use this estimate to take stochastic gradient steps. We use the step-size sequence $\rho^n$ proposed by \citet{Kucukelbir2016} (also used by \citet{RuizTB2016}), which combines \textsc{rmsprop} \citep{Tieleman2012} and Adagrad \citep{Duchi2011}. It is
\begin{align}
\begin{split}
\rho^n &= \eta \cdot n^{-1/2 + \delta} \cdot \left(1 + \sqrt{s^n}\right)^{-1},\\
s^n &= t \left( \hat{g}^n \right)^2 + (1-t) s^{n-1},
\label{eq:stepsize}
\end{split}
\end{align}
where $n$ is the iteration number. We set $\delta = 10^{-16}$ and $t=0.1$, and we try different values for $\eta$. (When~$\theta$ is a vector, the operations above are element-wise.)

We summarize the full method in Algorithm~\ref{alg:svi}. We refer to our method as \gls{RS-VI}. 

\section{Related Work}\label{sec:relatedwork}
The reparameterization trick has also been used in \gls{ADVI} \citep{Kucukelbir2015,Kucukelbir2016}. \gls{ADVI} applies a transformation to the random variables such that their support is on the reals and then places a Gaussian variational posterior approximation over the transformed variable~$\eps$. In this way, \gls{ADVI} allows for standard reparameterization, but it cannot fit gamma or Dirichlet variational posteriors, for example. Thus, \gls{ADVI} struggles to approximate probability densities with singularities, as noted by \citet{RuizTB2016}. In contrast, our approach allows us to apply the reparameterization trick on a wider class of variational distributions, which may be more appropriate when the exact posterior exhibits sparsity.

In the literature, we can find other lines of research that focus on extending the reparameterization gradient to other distributions. For the gamma distribution, \citet{Knowles2015} proposed a method based on approximations of the inverse cumulative density function; however, this approach is limited only to the gamma distribution and it involves expensive computations. 
For general expectations, \citet{Schulman2015} expressed the gradient as a sum of a reparameterization term and a correction term to automatically estimate the gradient 
 in the context of stochastic computation graphs. However, it is not possible to directly apply it to variational inference with acceptance-rejection sampling. This is due to discontinuities in the accept--reject step and the fact that a rejection sampler produces a \emph{random number} of random variables. 
Recently, another line of work has focused on applying reparameterization to discrete latent variable models \citep{Maddison2017,Jang2017} through a continuous relaxation of the discrete space.

The \gls{G-REP} method \citep{RuizTB2016} exploits the decomposition of the gradient as $g_{\text{rep}}+g_{\text{cor}}$ by applying a transformation based on standardization of the sufficient statistics of $z$. Our approach differs from \gls{G-REP}: instead of searching for a transformation of $z$ that makes the distribution of $\eps$ weakly dependent on the variational parameters (namely, standardization), we do the opposite by choosing a transformation of a simple random variable $\eps$ such that the distribution of $z=h(\eps,\theta)$ is \emph{almost} equal to $q(z\g\theta)$. For that, we reuse the transformations typically used in rejection sampling. Rather than having to derive a new transformation for each variational distribution, we leverage decades of research on transformations in the rejection sampling literature \citep{devroye1986}. In rejection sampling, these transformations (and the distributions of $\eps$) are chosen so that they have high acceptance probability, which means we should expect to obtain $g_{\text{cor}}\approx 0$ with \gls{RS-VI}. In Sections~\ref{sec:examples} and \ref{sec:experiments} we compare \gls{RS-VI} with \gls{G-REP} and show that it exhibits significantly lower variance, thus leading to faster convergence of the inference algorithm.

Finally, another line of research in non-conjugate variational inference aims at developing more expressive variational families \citep{Salimans2015,Tran2016,Maaloe2016,Ranganath2016}. \gls{RS-VI} can extend the reparameterization trick to these methods as well, whenever rejection sampling is used to generate the random variables.


\section{Examples of Acceptance-Rejection Reparameterization}
\label{sec:examples}

As two examples, we study rejection sampling and reparameterization of two well-known distributions: the gamma and Dirichlet. These have been widely used as variational families for approximate Bayesian inference. 
We emphasize that \gls{RS-VI} is not limited to these two cases, it applies to any variational family $q(z\g\theta)$ for which a reparameterizable rejection sampler exists. We provide other examples in the supplement.

\subsection{Gamma Distribution}
\label{subsec:gamma_example}

One of the most widely used rejection sampler is for the gamma distribution. Indeed, the gamma distribution is also used in practice to generate \eg beta, Dirichlet, and Student's t-distributed random variables. The gamma distribution, $\gam(\alpha,\beta)$, is defined by its shape $\alpha$ and rate $\beta$.

For $\gam(\alpha,1)$ with $\alpha \geq 1$, \citet{Marsaglia:2000} developed an efficient rejection sampler. It uses a truncated version of the following reparameterization
\begin{align}
\label{eq:marsaglia}
  z &= h_{\gam}(\eps,\alpha) \eqdef \left(\alpha-\frac{1}{3}\right)\left(1+\frac{\eps}{\sqrt{9\alpha-3}}\right)^3, \\
\nonumber &\eps \sim s(\eps) \eqdef \N(0,1).
\end{align}
When $\beta \neq 1$, we divide $z$ by the rate $\beta$ and obtain a sample distributed as $\gam(\alpha,\beta)$. The acceptance probability is very high: it exceeds $0.95$ and $0.98$ for $\alpha=1$ and $\alpha=2$, respectively. In fact, as $\alpha \to \infty$ we have that $\pi(\eps\g\theta) \to s(\eps)$, which means that the acceptance probability approaches $1$. Figure~\ref{fig:real_gamma} illustrates the involved functions and distributions for shape $\alpha=2$.

For $\alpha < 1$, we observe that $z = u^{1/\alpha} \tilde z$ is distributed as $\gam(\alpha,\beta)$ for $\tilde z \sim \gam(\alpha+1,\beta)$ and ${u \sim \uni[0,1]}$ \citep{stuart1962,devroye1986}, and apply the rejection sampler above for $\tilde z$.

\begin{figure}[t]
\includegraphics[width=0.9\columnwidth]{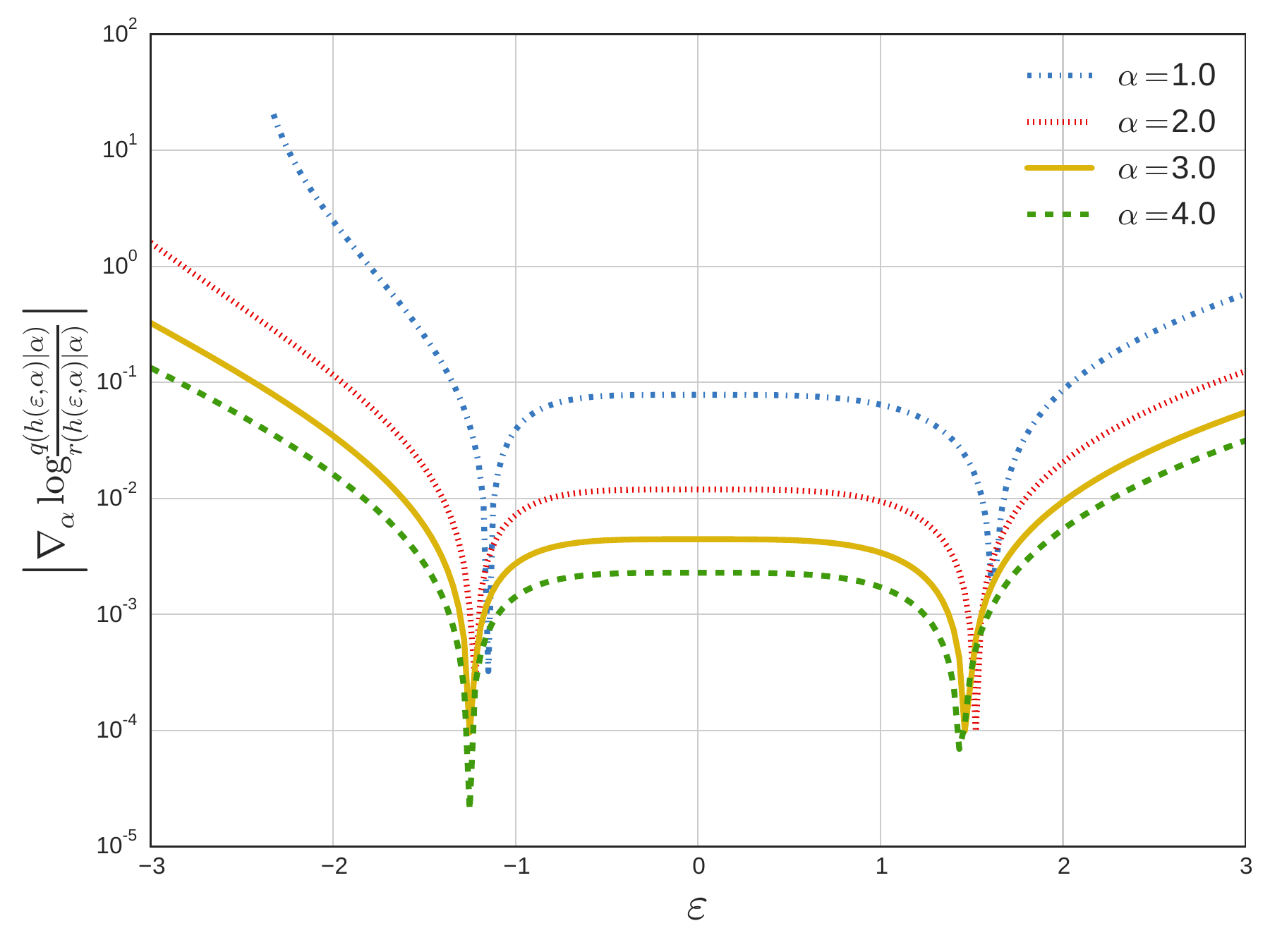}
\vspace*{-10pt}
\caption{The correction term of \gls{RS-VI}, and as a result the gradient variance, decreases  with increasing shape $\alpha$. We plot absolute value of the gradient of the log-ratio between the target (gamma) and proposal distributions as a function of $\eps$.
}\label{fig:gammacorr}
\end{figure}


We now study the quality of the transformation in \eqref{eq:marsaglia} for different values of the shape parameter $\alpha$. Since $\pi(\eps\g\theta) \to s(\eps)$ as $\alpha\to\infty$, we should expect the correction term $g_{\text{cor}}$ to decrease with $\alpha$. We show that in Figure~\ref{fig:gammacorr}, where we plot the log-ratio \eqref{eq:gradQRcorr} from the correction term as a function of $\eps$ for four values of~$\alpha$. We additionally show in Figure~\ref{fig:compare_qeps} that the distribution $\pi(\eps\g\theta)$ converges to $s(\eps)$ (a standard normal) as $\alpha$ increases. For large $\alpha$, $\pi(\eps\g\theta)\approx s(\eps)$ and the acceptance probability of the rejection sampler approaches~$1$, which makes the correction term negligible. In Figure~\ref{fig:compare_qeps}, we also show that $\pi(\eps\g\theta)$ converges faster to a standard normal than the standardization procedure used in \gls{G-REP}. We exploit this property---that performance improves with $\alpha$---to artificially increase the shape for any gamma distribution. We now explain this trick, which we call \emph{shape augmentation}.
\begin{figure*}[t]
  \centering
  \includegraphics[width=1.5\columnwidth]{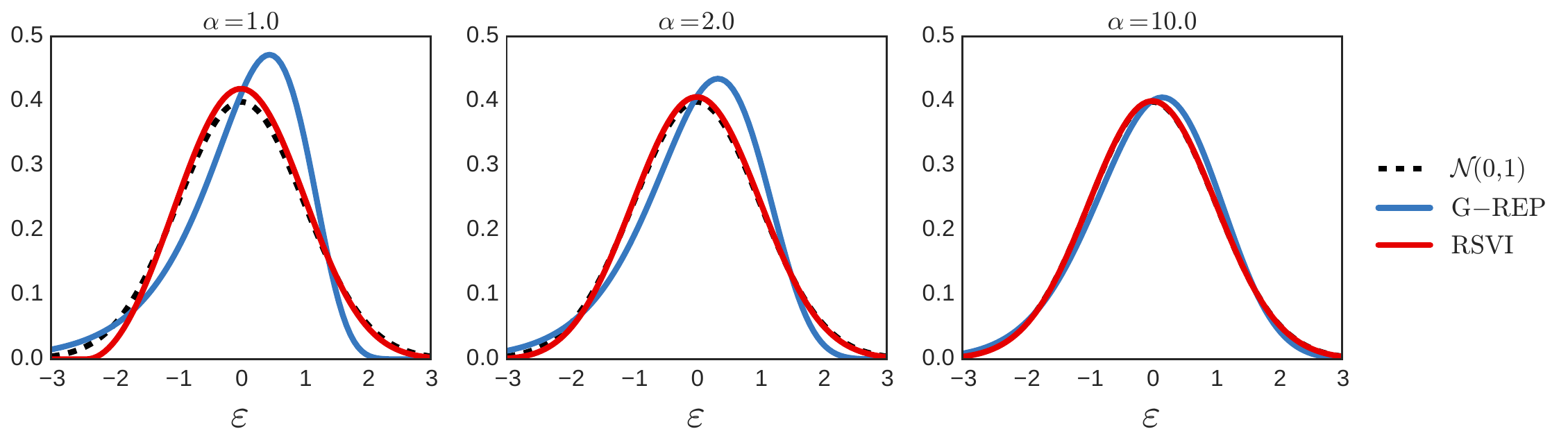} 
  \vspace*{-10pt}
  \caption{In the distribution on the transformed space $\eps$ for a gamma distribution we can see that the rejection sampling-inspired transformation converges faster to a standard normal. Therefore it is less dependent on the parameter $\alpha$, which implies a smaller correction term. We compare the transformation of \gls{RS-VI} (this paper) with the standardization procedure suggested in \gls{G-REP} \citep{RuizTB2016}, for shape parameters $\alpha = \{1,2,10\}$.}
  \label{fig:compare_qeps}
\end{figure*}


\parhead{Shape augmentation.}
Here we show how to exploit the fact that the rejection sampler improves for increasing shape $\alpha$. We make repeated use of the trick above, using uniform variables, to control the value of $\alpha$ that goes into the rejection sampler. That is, to compute the \gls{ELBO} for a $\gam(\alpha,1)$ distribution, we can first express the random variable as $z = \tilde z \prod_{i=1}^B u_i^{\frac{1}{\alpha+i-1}}$ (for some positive integer $B$), $\tilde z \sim \gam(\alpha+B,1)$ and $u_i \iidsim \uni[0,1]$. This can be proved by induction, since $\tilde z u_B^{\frac{1}{\alpha+B-1}} \sim \gam(\alpha+B-1,1)$, $\tilde z u_B^{\frac{1}{\alpha+B-1}} u_{B-1}^{\frac{1}{\alpha+B-2}} \sim \gam(\alpha+B-2,1)$, \etc. Hence, we can apply the rejection sampling framework for $\tilde z \sim \gam(\alpha+B,1)$ instead of the original $z$. We study the effect of shape augmentation on the variance in Section~\ref{subsec:dirichlet_example}.

\subsection{Dirichlet Distribution}
\label{subsec:dirichlet_example}

The $\diri(\alpha_{1:K})$ distribution, with concentration parameters $\alpha_{1:K}$, is a $K$-dimensional multivariate distribution with ${K-1}$ degrees of freedom. To simulate random variables we use the fact that if ${\tilde z_k \sim \gam(\alpha_k,1)}$ \iid, then ${z_{1:K} = \left(\sum_\ell \tilde z_\ell\right)^{-1}(\tilde z_1,\ldots,\tilde z_K)^\top \sim \diri(\alpha_{1:K})}$. 

Thus, we make a change of variables to reduce the problem to that of simulating independent gamma distributed random variables,
\begin{align*}
&\E_{q(z_{1:K}\g\alpha_{1:K})}[f(z_{1:K})] =\\
&= \int f\left( \frac{\tilde z_{1:K}}{\sum_{\ell=1}^K \tilde z_\ell}\right)\prod_{k=1}^K\gam(\tilde z_k\g\alpha_k,1) \myd \tilde z_{1:K}.
\end{align*}
We apply the transformation in Section~\ref{subsec:gamma_example} for the gamma-distributed variables, $\tilde z_k = h_{\gam}(\eps_k,\alpha_k)$, where the variables $\eps_k$ are generated by independent gamma rejection samplers.
\begin{figure}[t]
\includegraphics[width=0.9\columnwidth]{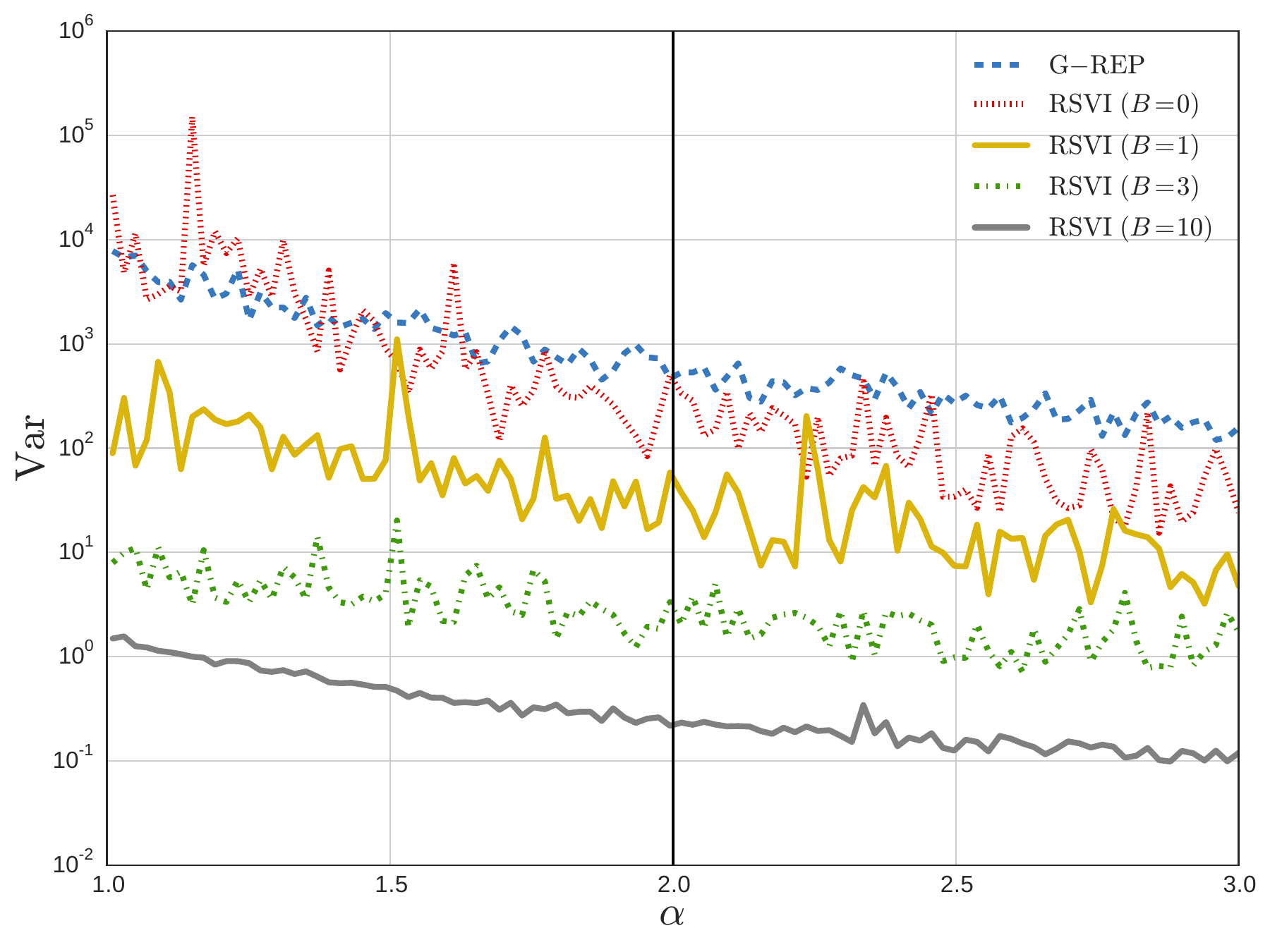}
\caption{\gls{RS-VI} (this paper) achieves lower variance compared to \gls{G-REP} \citep{RuizTB2016}. The estimated variance is for the first component of Dirichlet approximation to a multinomial likelihood with uniform Dirichlet prior.  Optimal concentration is $\alpha=2$, and $B$ denotes shape augmentation. 
}\label{fig:dirichlet}
\end{figure}
To showcase this, we study a simple conjugate model where the exact gradient and posterior are available: a multinomial likelihood with Dirichlet prior and Dirichlet variational distribution. In Figure~\ref{fig:dirichlet} we show the resulting variance of the first component of the gradient, based on simulated data from a Dirichlet distribution with $K=100$ components, uniform prior, and $N=100$ trials. We compare the variance of \gls{RS-VI} (for various shape augmentation settings) with the \gls{G-REP} approach \citep{RuizTB2016}. \gls{RS-VI} performs better even without the augmentation trick, and significantly better with it.

\section{Experiments}\label{sec:experiments}
\glsresetall

In Section~\ref{sec:examples} we compared \gls{RS-VI} with \gls{G-REP} and found a substantial variance reduction on synthetic examples. Here we evaluate \gls{RS-VI} on a more challenging model, the sparse gamma \gls{DEF}~\citep{Ranganath2015}. On two real datasets, we compare \gls{RS-VI} with state-of-the-art methods: \gls{ADVI} \citep{Kucukelbir2015,Kucukelbir2016}, \gls{BBVI} \citep{Ranganath2014}, and \gls{G-REP} \citep{RuizTB2016}.

\parhead{Data.}
The datasets we consider are the Olivetti faces\footnote{\url{http://www.cl.cam.ac.uk/research/dtg/attarchive/facedatabase.html}} and \gls{NIPS} 2011 conference papers. The Olivetti faces dataset consists of $64\times 64$ gray-scale images of human faces in $8$ bits, \ie, the data is discrete and in the set $\{0,\ldots,255\}$. In the \gls{NIPS} dataset we have documents in a bag-of-words format with an effective vocabulary of $5715$ words. 
\begin{figure}[t]
\centering
\includegraphics[width=0.9\columnwidth]{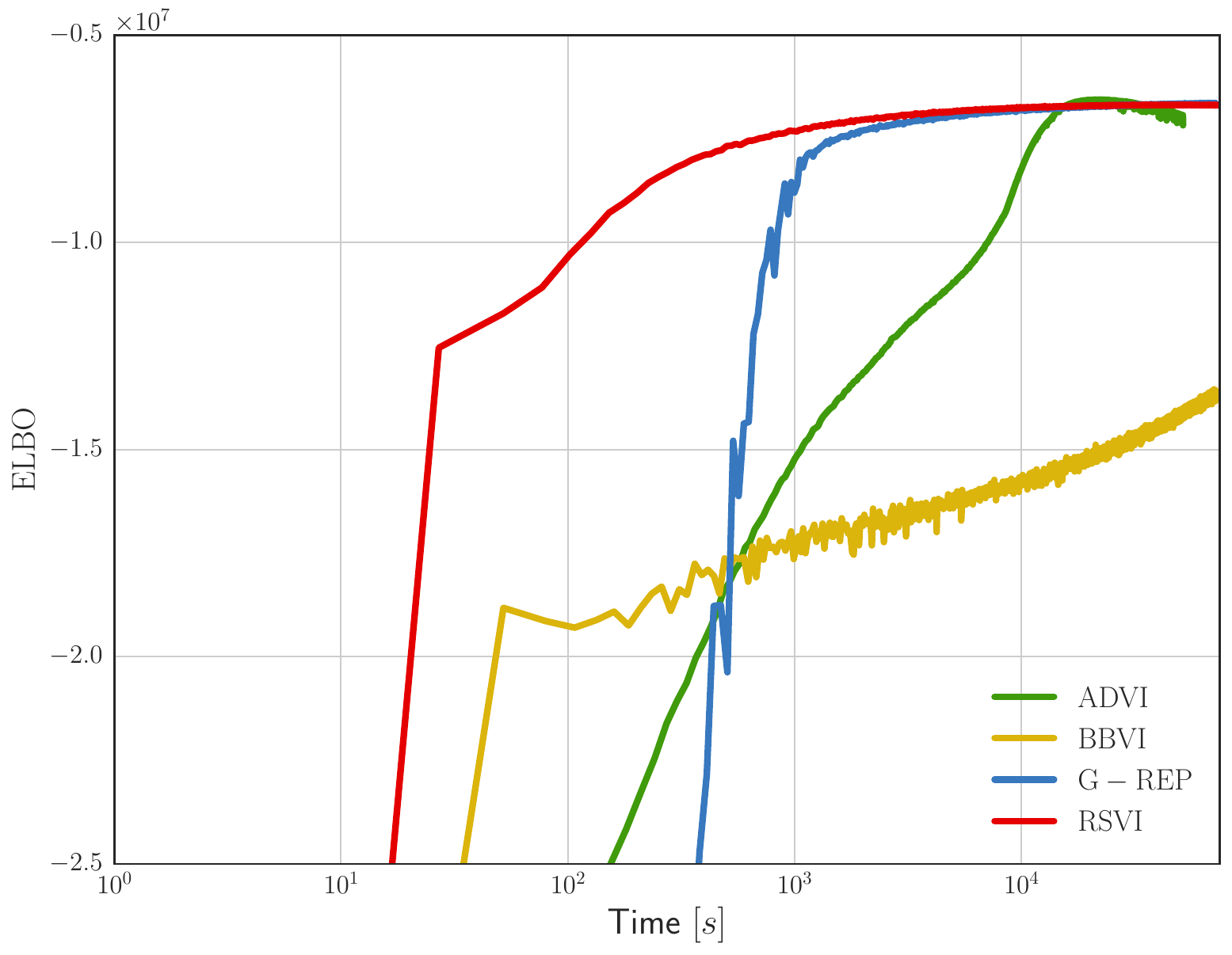}
\caption{\gls{RS-VI} (this paper) presents a significantly faster initial improvement of the \gls{ELBO} as a function of wall-clock time. The model is a sparse gamma \gls{DEF}, applied to the Olivetti faces dataset, and we compare with \gls{ADVI} \citep{Kucukelbir2016}, \gls{BBVI} \citep{Ranganath2014}, and \gls{G-REP} \citep{RuizTB2016}. }\label{fig:olivetteDEF}
\end{figure}

\begin{table*}[t]
\centering
\begin{subtable}{0.45\textwidth}
\begin{tabular}{cccc}
\toprule
 & \gls{RS-VI} $B=1$ & \gls{RS-VI} $B=4$ &\gls{G-REP}\\ \hline
Min & $6.0\mathrm{e}{-4}$ & $1.2\mathrm{e}{-3}$ & $2.7\mathrm{e}{-3}$\\
Median & $\mathbf{9.0\mathrm{e}{7}}$ & $\mathbf{2.9\mathrm{e}{7}}$ & $1.6\mathrm{e}{12}$\\
Max & $1.2\mathrm{e}{17}$ & $\mathbf{3.4\mathrm{e}{14}}$ & $1.5\mathrm{e}{17}$ \\ \bottomrule 
\end{tabular}
\end{subtable}
\hspace*{15pt}
\begin{subtable}{0.45\textwidth}
\begin{tabular}{cccc}
\toprule
 & \gls{RS-VI} $B=1$ & \gls{RS-VI} $B=4$ &\gls{G-REP}\\ \hline
Min & $1.8\mathrm{e}{-3}$ & $1.5\mathrm{e}{-3}$ & $2.6\mathrm{e}{-3}$\\
Median & $1.2\mathrm{e}{4}$ & $\mathbf{4.5\mathrm{e}{3}}$ & $1.5\mathrm{e}{7}$\\
Max & $1.4\mathrm{e}{12}$ & $\mathbf{1.6\mathrm{e}{11}}$ & $3.5\mathrm{e}{12}$ \\ \bottomrule
\end{tabular}
\end{subtable}
\caption{The \gls{RS-VI} gradient (this paper) exhibits lower variance than \gls{G-REP} \citep{RuizTB2016}. We show estimated variance, based on $10$ samples, of \gls{G-REP} and \gls{RS-VI} (for $B=1,4$ shape augmentation steps), for parameters at the initialization point (\emph{left}) and at iteration $2600$ in \gls{RS-VI} (\emph{right}), estimated for the \gls{NIPS} data.}\label{tab:vardef}
\end{table*}

\parhead{Model.}
The sparse gamma \gls{DEF} \citep{Ranganath2015} is a multi-layered probabilistic model that mimics the architecture of deep neural networks. It models the data using a set of local latent variables $z_{n,k}^\ell$ where $n$ indexes observations, $k$ components, and $\ell$ layers. These local variables are connected between layers through global weights $w_{k,k'}^\ell$. The observations are $x_{n,d}$, where $d$ denotes dimension. The joint probabilistic model is defined as
\begin{align}
\begin{split}
z_{n,k}^\ell &\sim \gam\left(\alpha_z,\frac{\alpha_z}{\sum_{k'} w_{k,k'}^\ell z_{n,k'}^{\ell+1}} \right), \\
x_{n,d} &\sim \mathrm{Poisson}\left(\sum_{k} w_{k,d}^0 z_{n,k}^1 \right).
\end{split}
\end{align}
We set $\alpha_z=0.1$ in the experiments. All priors on the weights are set to $\gam(0.1,0.3)$, and the top-layer local variables priors are set to $\gam(0.1,0.1)$. We use $3$ layers, with $100$, $40$, and $15$ components in each. 
This is the same model that was studied by \citet{RuizTB2016}, where \gls{G-REP} was shown to outperform both \gls{BBVI} (with control variates and Rao-Blackwellization), as well as \gls{ADVI}. In the experiments we follow their approach and parameterize the variational approximating gamma distribution using the shape and mean. To avoid constrained optimization we use the transform $\theta = \log(1+\exp(\vartheta))$ for non-negative variational parameters $\theta$, and optimize $\vartheta$ in the unconstrained space.

\parhead{Results.}
For the Olivetti faces we explore ${\eta\in\{0.75,1,2,5\}}$ and show the resulting \gls{ELBO} of the best one in Figure~\ref{fig:olivetteDEF}. We can see that \gls{RS-VI} has a significantly faster initial improvement than any of the other methods.\footnote{%
	The results of \gls{G-REP}, \gls{ADVI} and \gls{BBVI} where reproduced with permission from \citet{RuizTB2016}.%
} 
The wall-clock time for \gls{RS-VI} is based on a Python implementation (average $1.5$s per iteration) using the automatic differentiation package autograd \citep{autograd}. We found that \gls{RS-VI} is approximately two times faster than \gls{G-REP} for comparable implementations. One reason for this is that the transformations based on rejection sampling are cheaper to evaluate. Indeed, the research literature on rejection sampling is heavily focused on finding cheap and efficient transformations.


For the \gls{NIPS} dataset, we now compare the variance of the gradients between the two estimators, \gls{RS-VI} and \gls{G-REP}, for different shape augmentation steps $B$. In Table~\ref{tab:vardef} we show the minimum, median, and maximum values of the variance across all dimensions. We can see that \gls{RS-VI} again clearly outperforms \gls{G-REP} in terms of variance. Moreover, increasing the number of augmentation steps $B$ provides even further improvements.




\section{Conclusions}
\glsresetall

We introduced \gls{RS-VI}, a method for deriving reparameterization
gradients when simulation from the variational distribution is done
using a acceptance-rejection sampler.  In practice, \gls{RS-VI} leads to
lower-variance gradients than other state-of-the-art methods.
Further, it enables reparameterization gradients for a large class of
variational distributions, taking advantage of the efficient
transformations developed in the rejection sampling literature.

This work opens the door to other strategies that ``remove the lid''
from existing black-box samplers in the service of variational
inference.  As future work, we can consider more complicated
simulation algorithms with accept-reject-like steps, such as adaptive
rejection sampling, importance sampling, sequential Monte Carlo, or
Markov chain Monte Carlo.





\subsubsection*{Acknowledgements}
Christian A.\ Naesseth is supported by CADICS, a Linnaeus Center, funded by the Swedish Research Council (VR). Francisco J.\ R.\ Ruiz is supported by the EU H2020 programme (Marie Sk\l{}odowska-Curie grant agreement 706760). Scott W. Linderman is supported by the Simons Foundation SCGB-418011.
This work is supported by NSF IIS-1247664, ONR N00014-11-1-0651, DARPA
PPAML FA8750-14-2-0009, DARPA SIMPLEX N66001-15-C-4032, Adobe, and the
Alfred P. Sloan Foundation. The authors would like to thank Alp Kucukelbir and Dustin Tran for helpful comments and discussion.


\bibliographystyle{abbrvnat}
\bibliography{refs}

\newpage
\appendix
\section{Supplementary Material}
\subsection{Distribution of $\eps$}
Here we formalize the claim in the main manuscript regarding the distribution of the accepted variable $\eps$ in the rejection sampler. Recall that ${z=h(\eps,\theta), ~\eps \sim s(\eps)}$ is equivalent to $z\sim r(z\g\theta)$, and that $q(z\g\theta) \leq M_\theta r(z\g\theta)$. For simplicity we consider the univariate continuous case in the exposition below, but the result also holds for the discrete and multivariate settings. The cumulative distribution function for the accepted $\eps$ is given by
\begin{align*}
\begin{split}
&\Prb(E \leq \eps) = \sum_{i=1}^\infty \Prb(E \leq \eps, E = E_i)\\
&= \sum_{i=1}^\infty \Bigg[ \Prb\left(E_i \leq \eps, U_i < \frac{q(h(E_i,\theta)\g\theta)}{M_\theta r(h(E_i,\theta)\g\theta)}\right)\\
&\quad\prod_{j=1}^{i-1} \Prb\left(U_j \geq \frac{q(h(E_j,\theta)\g\theta)}{M_\theta r(h(E_j,\theta)\g\theta)}\right)\Bigg] \\
&= \sum_{i=1}^\infty \int_{-\infty}^\eps s(e) \frac{q(h(e,\theta)\g\theta)}{M_\theta r(h(e,\theta)\g\theta)} \myd e \prod_{j=1}^{i-1} \left(1-\frac{1}{M_\theta}\right)\\
&= \int_{-\infty}^\eps s(e) \frac{q(h(e,\theta)\g\theta)}{r(h(e,\theta)\g\theta)} \myd e \cdot \frac{1}{M_\theta} \cdot \sum_{i=1}^\infty  \left(1-\frac{1}{M_\theta}\right)^{i-1}\\
&= \int_{-\infty}^\eps s(e) \frac{q(h(e,\theta)\g\theta)}{r(h(e,\theta)\g\theta)}\myd e.
\end{split}
\end{align*}
Here, we have applied that ${z=h(\eps,\theta), ~\eps \sim s(\eps)}$ is a reparameterization of $z\sim r(z\g\theta)$, and thus
\begin{align*}
\begin{split}
& \Prb\left(U_j \geq \frac{q(h(E_j,\theta)\g\theta)}{M_\theta r(h(E_j,\theta)\g\theta)}\right) \\
& = \int_{-\infty}^{\infty} s(e)\left( 1- \frac{q(h(e,\theta)\g\theta)}{M_\theta r(h(e,\theta)\g\theta)}\right) \myd e \\
& = 1-\frac{1}{M_\theta}\E_{s(e)}\left[ \frac{q(h(e,\theta)\g\theta)}{r(h(e,\theta)\g\theta)}\right] \\
& = 1-\frac{1}{M_\theta}\E_{r(z\g \theta)} \left[ \frac{q(z\g\theta)}{r(z\g\theta)} \right] = 1-\frac{1}{M_\theta}.
\end{split}
\end{align*}

The density is obtained by taking the derivative of the cumulative distribution function with respect to $\eps$,
\begin{align*}
\frac{\myd }{\myd \eps} \Prb(E \leq \eps) =  s(\eps) \frac{q(h(\eps,\theta)\g\theta)}{r(h(\eps,\theta)\g\theta)},
\end{align*}
which is the expression from the main manuscript.

The motivation from the main manuscript is basically a standard ``area-under-the-curve'' or geometric argument for rejection sampling \citep{robert2004monte}, but for $\eps$ instead of $z$.

\subsection{Derivation of the Gradient}\label{sec:gradient}
We provide below details for the derivation of the gradient. We assume that $h$ is differentiable (almost everywhere) with respect to $\theta$, and that ${f(h(\eps,\theta))\frac{q(h(\eps,\theta)\g\theta)}{r(h(\eps,\theta)\g\theta)}}$ is continuous in $\theta$ for all $\eps$. Then, we have
\begin{align*}
\begin{split}
&\grad_\theta \E_{q(z\g \theta)}[f(z)] = \grad_\theta \E_{\pi(\eps\g\theta)}[f(h(\eps,\theta))] \\
&= \int s(\eps) \grad_\theta \left( f\left(h(\eps,\theta)\right)  \frac{q\left(h(\eps,\theta)\g \theta\right)}{r\left(h(\eps,\theta)\g \theta\right)}  \right) \myd \eps\\
&= \int s(\eps)  \frac{q\left(h(\eps,\theta)\g \theta\right)}{r\left(h(\eps,\theta)\g \theta\right)}  \grad_\theta f\left(h(\eps,\theta)\right)  \myd \eps\\
&+ \int s(\eps) f\left(h(\eps,\theta)\right)  \grad_\theta \left( \frac{q\left(h(\eps,\theta)\g \theta\right)}{r\left(h(\eps,\theta)\g \theta\right)} \right) \myd \eps\\
&= \underbrace{\E_{\pi(\eps\g\theta)}[\grad_\theta f(h(\eps,\theta))]}_{\defeq g_{\text{rep}}}+ \\
&\quad + \underbrace{\E_{\pi(\eps\g\theta)}\left[ f(h(\eps,\theta)) \grad_\theta  \log \frac{q(h(\eps,\theta)\g \theta)}{r(h(\eps,\theta)\g \theta)} \right]}_{\defeq g_{\text{cor}}},
\end{split}
\end{align*}
where in the last step we have identified $\pi(\eps\g\theta)$ and made use of the log-derivative trick
\begin{align*}
 \grad_\theta \frac{q\left(h(\eps,\theta)\g \theta\right)}{r\left(h(\eps,\theta)\g \theta\right)} = \frac{q\left(h(\eps,\theta)\g \theta\right)}{r\left(h(\eps,\theta)\g \theta\right)} \grad_\theta \log \frac{q\left(h(\eps,\theta)\g \theta\right)}{r\left(h(\eps,\theta)\g \theta\right)}.
\end{align*}

\paragraph{Gradient of Log-Ratio in $g_{\text{cor}}$}
For invertible reparameterizations we can simplify the evaluation of the gradient of the log-ratio in $g_{\text{cor}}$ as follows using standard results on transformation of a random variable
\begin{align*}
&\grad_\theta  \log \frac{q(h(\eps,\theta)\g \theta)}{r(h(\eps,\theta)\g \theta)}  = \grad_\theta  \log q(h(\eps,\theta)\g \theta) +\nonumber\\
& +\grad_\theta \log \left| \frac{d h}{d\eps}(\eps,\theta) \right|- \grad_\theta \log \underbrace{s(h^{-1}(h(\eps,\theta),\theta))}_{= ~s(\eps)}\\
&=\grad_\theta  \log q(h(\eps,\theta)\g \theta) + \grad_\theta \log \left| \frac{d h}{d\eps}(\eps,\theta) \right|.
\end{align*}

\renewcommand{\arraystretch}{2.2}
\begin{table*}[t]
\centering
\begin{tabular}{ccc} 
\toprule
$q(z\g\theta)$ & $h(\eps,\theta)$ & $s(\eps)$\\
\hline
$\gam(\alpha,1)$ &$\left(\alpha-\frac{1}{3}\right) \left(1+\frac{\eps}{\sqrt{9\alpha-3}}\right)^3$ & $\eps\sim\N(0,1)$\\
$\tN(0,1,a,\infty)$ & $\sqrt{a^2-2\log \eps}$ & $\eps \sim \uni[0,1]$\\
$\mises(\kappa)$ & $\sign(\eps)\arccos\left( \frac{1+c\cos(\pi \eps)}{c+\cos(\pi\eps)} \right)$, $c = \frac{1+\rho^2}{2\rho}$, $\rho = \frac{r-\sqrt{2r}}{2\kappa}$, $r = 1+\sqrt{1+4\kappa^2}$ & $\eps \sim \uni[-1,1]$ \\ \bottomrule
\end{tabular}
\caption{Examples of reparameterizable rejection samplers; many more can be found in \citet{devroye1986}. The first column is the distribution, the second column is the transformation $h(\eps,\theta)$, and the last column is the proposal $s(\eps)$.\label{tab:rep_rejsamplers}}
\end{table*}
\begin{table*}[t]
\centering
\begin{tabular}{ccc}
\toprule
$q(z\g\theta)$ & $g(\tilde z, \theta)$ &  $p(\tilde z\g\theta)$\\
\hline
$\bet(\alpha,\beta)$ & $\displaystyle\frac{\tilde z_1}{\tilde z_1 + \tilde z_2}$ & $\tilde z_1 \sim \gam(\alpha,1)$, $\tilde z_2  \sim \gam(\beta,1)$\\
$\diri(\alpha_{1:K})$ &$\displaystyle\frac{1}{\sum_\ell \tilde z_\ell} \left(\tilde z_1,\ldots,\tilde z_K\right)^\top$ & $\tilde z_k \sim \gam(\alpha_k,1), ~k=1,\ldots,K$\\
$\St(\nu)$ & $\displaystyle\sqrt{\frac{\nu}{2 \tilde z_1}} \tilde z_2$ & $\tilde z_1 \sim \gam(\nu/2,1)$, $\tilde z_2 \sim \N(0,1)$\\
$\chi^2(k)$ & $\displaystyle2\tilde z$& $\tilde z \sim \gam(k/2,1)$\\
$\fdist(d_1,d_2)$ & $\displaystyle\frac{d_2 \tilde z_1}{d_1 \tilde z_2}$ & $\tilde z_1 \sim \gam(d_1/2,1)$, $\tilde z_2  \sim \gam(d_2/2,1)$\\
$\nakagami(m,\Omega)$ & $\displaystyle \sqrt{\frac{\Omega\tilde z}{m}}$ & $\tilde z \sim \gam(m,1)$ \\ \bottomrule
\end{tabular}
\caption{Examples of random variables as functions of auxiliary random variables with reparameterizable distributions. The first column is the distribution, the second column is a function $g(\tilde{z},\theta)$ mapping from the auxiliary variables to the desired variable, and the last column is the distribution of the auxiliary variables $\tilde z$.\label{tab:aux_rejSamplers}}
\end{table*}

\subsection{Examples of Reparameterizable Rejection Samplers}

We show in Table~\ref{tab:rep_rejsamplers} some examples of reparameterizable rejection samplers for three distributions, namely, the gamma, the truncated normal, and the von Misses distributions (for more examples, see \citet{devroye1986}). We show the distribution $q(z\g \theta)$, the transformation $h(\eps,\theta)$, and the proposal $s(\eps)$ used in the rejection sampler.

We show in Table~\ref{tab:aux_rejSamplers} six examples of distributions that can be reparameterized in terms of auxiliary gamma-distributed random variables. We show the distribution $q(z\g\theta)$, the distribution of the auxiliary gamma random variables $p(\tilde{z}\g\theta)$, and the mapping $z=g(\tilde{z},\theta)$.

\subsection{Reparameterizing the Gamma Distribution}
We provide details on reparameterization of the gamma distribution. In the following we consider rate $\beta=1$. Note that this is not a restriction, we can always reparameterize the rate. The density of the gamma random variable is given by
\begin{align*}
q(z\g\alpha) &= \frac{z^{\alpha-1} e^{-z}}{\Gamma(\alpha)},
\end{align*}
where $\Gamma(\alpha)$ is the gamma function. We make use of the reparameterization defined by
\begin{align*}
z &= h(\eps,\alpha) = \left(\alpha-\frac{1}{3}\right)\left(1+\frac{\eps}{\sqrt{9\alpha-3}}\right)^3,\\
\eps &\sim \N(0,1).
\end{align*}
Because $h$ is invertible we can make use of the simplified gradient of the log-ratio derived in Section~\ref{sec:gradient} above. The gradients of $\log q$ and $-\log r$ are given by
\begin{align*}
&\grad_\alpha  \log q(h(\eps,\alpha)\g \alpha) \\
&= \log (h(\eps,\alpha)) + (\alpha-1) \frac{\frac{d h(\eps,\alpha)}{d\alpha}}{h(\eps,\alpha)} -\frac{d h(\eps,\alpha)}{d\alpha} -\psi(\alpha),\\
&\grad_\alpha - \log r(h(\eps,\alpha)\g\alpha) = \grad_\alpha \log \left| \frac{d h}{d\eps}(\eps,\alpha) \right| \\
&= \frac{1}{2\left(\alpha-\frac{1}{3}\right)} - \frac{9 \eps}{\left(1+\frac{\eps}{\sqrt{9\alpha-3}}\right)\left(9\alpha-3\right)^{\frac{3}{2}}},
\end{align*}
where $\psi(\alpha)$ is the digamma function and
\begin{align*}
&\frac{d h(\eps,\alpha)}{d\alpha} \\
&= \left(1+\frac{\eps}{\sqrt{9\alpha-3}}\right)^3 - \frac{27 \eps}{2(9\alpha-3)^{\frac{3}{2}}}\left(1+\frac{\eps}{\sqrt{9\alpha-3}}\right)^2.
\end{align*}

\end{document}